\documentclass{article}

\usepackage[margin=1.25in]{geometry}
\usepackage[utf8]{inputenc} 
\usepackage[T1]{fontenc}    
\usepackage{hyperref}       
\usepackage{url}            
\usepackage{booktabs}       
\usepackage{amsfonts}       
\usepackage{nicefrac}       
\usepackage{microtype}      %
\usepackage[toc,page]{appendix}

\usepackage{graphicx}
\usepackage{subfigure}
\usepackage{booktabs}
\usepackage{hyperref}
\usepackage{amsmath}
\usepackage{amssymb}
\usepackage[dvipsnames]{xcolor}
\usepackage{cleveref}
\usepackage{algorithm}
\usepackage[noend]{algpseudocode}
\usepackage{amsthm}
\usepackage{natbib}
\usepackage{authblk}

\newtheorem{theorem}{Theorem}

\begin{document}

\title{A general method for regularizing tensor decomposition methods via pseudo-data}
\author[1]{\textbf{Omer Gottesman}\thanks{\href{mailto:gottesman@fas.harvard.edu}{gottesman@fas.harvard.edu}}}
\author[1]{\textbf{Weiwei Pan}}
\author[1]{\textbf{Finale Doshi-Velez}\thanks{\href{mailto:finale@seas.harvard.edu}{finale@seas.harvard.edu}}}
\affil[1]{Paulson School of Engineering and Applied Sciences, Harvard University}
\date{}

\maketitle

\begin{abstract}

Tensor decomposition methods allow us to learn the parameters of latent variable models through decomposition of low-order moments of data. A significant limitation of these algorithms is that there exists no general method to regularize them, and in the past regularization has mostly been performed using bespoke modifications to the algorithms, tailored for the particular form of the desired regularizer. We present a general method of regularizing tensor decomposition methods which can be used for any likelihood model that is learnable using tensor decomposition methods and any differentiable regularization function by supplementing the training data with pseudo-data. The pseudo-data is optimized to balance two terms: being as close as possible to the true data and enforcing the desired regularization.  On synthetic, semi-synthetic and real data, we demonstrate that our method can improve inference accuracy and regularize for a broad range of goals including transfer learning, sparsity, interpretability, and orthogonality of the learned parameters.

\end{abstract}

\section{Introduction}
\label{sec:intro}

Tensor decomposition methods (TDMs) have recently gained popularity as ways of performing inference for latent variable models~\citep{anandkumar2014tensor}. The interest in these methods is motivated by the fact that they come with theoretical global convergence guarantees in the limit of infinite data~\citep{anandkumar2012spectral,arora2013practical}. However, a main limitation of these methods is that they lack natural methods for regularization or encouraging desired properties on the model parameters when the amount of data is limited.

Previous works attempted to alleviate this drawback by modifying existing tensor decomposition methods to incorporate specific constraints, such as sparsity~\citep{sun2015provable}, or incorporate modeling assumptions, such as the existence of anchor words~\citep{arora2013practical,nguyen2014anchors}.  All of these works develop bespoke algorithms tailored to those constraints or assumptions.  Furthermore, many of these methods impose hard constraints on the learned model, which may be detrimental as the size of the data grow---framed in the context of Bayesian intuition, when we have a lot of data, we want our methods to allow the evidence to overwhelm our priors.

We introduce an alternative approach which can be applied to encourage \emph{any} (differentiable) desired structure or properties on the model parameters, and which will only encourage this ``prior'' information when the data is insufficient.  Specifically, we adopt the common view of Bayesian priors as representing ``pseudo-observations'' of artificial data which bias our learned model parameters towards our prior belief~\citep{bishop2006pattern}. We apply the tensor decomposition method of~\citet{anandkumar2014tensor} to data sets comprised of both the actual data and an artificial pseudo-data.  Gradient descent and automatic differentiation~\citep{baydin2015automatic,maclaurin2015autograd} can then be used to optimize our pseudo-data such that they maximize the desired properties on the inferred model parameters while still remaining as similar as possible to the actual data.

The resulting algorithm provides a method of imposing any regularizer on the standard TDMs and can be applied to any likelihood model which is learnable using TDMs. We provide theoretical analysis and prove that in the limit of infinite training data, the results of our algorithm converge to the results of standard TDMs which are known to be consistent. We empirically demonstrate our method can regularize for a wide range of properties---knowledge transfer, sparsity, interpretability, and orthogonality of the learned parameters---on two likelihood models---Gaussian mixtures and Latent Dirichlet Allocation (LDA).

\section{Related Work}
\label{sec:related_work}
There exists a very large literature on algorithms for performing inference on latent variable models with various kinds of priors or regularizers.  When placing this work in the context of previous and related work, we emphasize that while we draw inspiration from Bayesian statistics, where priors can be regarded as representing pseudo-observations, our formulation is \emph{not} Bayesian---we do not translate between the pseudo-data and an actual probability distribution on the model parameters. Rather, the pseudo-data should be thought of as a regularization scheme particularly well-suited for tensor decomposition approaches.

Several works have adapted standard tensor decomposition methods to incorporate specific regularizers and constraints: \citet{sun2015provable} develop methods designed to find sparse decompositions; \citet{nguyen2014anchors} produce more robust inference for topic models via the notion of anchor words~\citep{arora2013practical}. (While not strictly a regularization, the anchor words assumption is a form of prior knowledge imposed on the model parameters.) \citet{cohen2013experiments,duchi2008efficient} detail how to ensure that learned parameters do not have invalid values (e.g. topic models must be valid probability distributions).  All of these approaches are specific to the kind of regularization or constraint, and often specific to a certain generative model (e.g. LDA). 

Finally, closely related to tensor decomposition methods are methods based on the generalized method of moments (GMM)~\citep{hansen1982large}: both learn the structure of a distribution from low order moments of the data. Recently, several methods have been proposed to regularize GMMs \citep{tran2016spectral, lewis2018adversarial, yin2009bayesian}. In general, however, most GMM algorithms do not leverage the moments eigenstructure to perform the provably optimal inference which is one of the main draws of TDMs.

\section{Background and Notation}
\label{sec:background}

Many common generative models have parameters that can be expressed as a matrix $A\in\mathbb{R}^{D\times K}$, where $D$ is the dimensionality of the data and $K$ is the number of latent variables.  For example, the columns of $A$ could represent the means of Gaussian mixtures or the topic-word probabilities in LDA~\citep{blei2003latent}.  Tensor decomposition methods leverage the relationship between the empirical moments of the data and the latent parameters of the model. Specifically, $A$ is learned by matching the low order moments of the model parameters,
\begin{align}
M_2 & = \sum_{k=1}^K \beta_k a_k a_k^T, \nonumber \\
M_3 & = \sum_{k=1}^K \gamma_k a_k \otimes a_k \otimes a_k, \label{eq:decomposition}
\end{align}
with empirical estimates, $\hat{M}_2$ and $\hat{M}_3$, that can be computed from data using expressions which are specific to the particular likelihood model and are presented in Appendix \ref{sec:empirical_moments}. Here, $a_k$ is the $k^{th}$ column of $A$, and $\beta_k$ and $\gamma_k$ are constants that come from the model parameters (depend on each generative model). Throughout this paper, we use $a_k$ for the column vectors of $A$ and $a_{d,k}$ for individual elements of $A$.

The decomposition to solve for $\{a_k\}_{k=1}^K$ in Equation~\ref{eq:decomposition} given estimates of $M_2$ and $M_3$ is performed in two stages.  We outline the idea here and refer the reader to \citet{anandkumar2014tensor} for details. First, one computes a whitening matrix $W$ such that $W^T \hat{M}_2 W=I$, and use $W$ to project $\hat{M}_3$ to a $\mathbb{R}^{K\times K \times K}$ tensor which has an orthogonal decomposition. Then, the tensor power method can be used to decompose the reduced third order tensor.  This process comes with theoretical guarantees on consistency and convergence for recovering $A$ with sufficiently large data sets.

\section{Pseudo-data for Regularization}
\label{sec:pseudo_data_as_priors}
While tensor decomposition methods come with consistency and convergence guarantees, it is not obvious how to incorporate regularization appropriately.  The core issue is that tensor decomposition methods provide an algorithm that produces a point estimate of the latent parameters $A$.  There exists no log-likelihood and log-prior, as one would encounter in MAP estimation, nor is there an objective where one can simply add an arbitrary regularizer with some desired strength.  If we modify the latent parameters, $A$, that are output from a tensor decomposition method, we have no notion of how much predictive quality we sacrifice.

To solve this problem we draw on the intuition which is often used when describing Bayesian priors, of viewing the priors as encoding pseudo-data which match our expectation of the data. Given a prior or a regularizer, we can choose our pseudo-data in a way which will drive the inferred latent parameters towards a form which is more in line with our expectations.  We balance that goal with the requirement that our pseudo-data must also be likely under the model parameters we learn using only the real data, which ensures that the pseudo-data are not too different from the real data.

\paragraph{Objective}
Formally, let $X_T \in \mathbb{R}^{D \times N_T}$ be a set of $N_T$ observations of true data.  Let $X_P \in \mathbb{R}^{D \times N_P}$ be a set of $N_P$ observations of artificial pseudo-data.  We seek a pseudo-dataset $X_P$ that maximizes the following objective
\begin{align}
\label{eq:cost_func__general}
L(X_T,X_P,\lambda) = -\log p(X_P|A_T) + \lambda R(A_{T\cup P}),
\end{align}
where $A_T$ and $A_{T\cup P}$ are the parameter matrices learned by the TDM using either only the real training data, $X_T$, or a combination of both the real and pseudo-data, $X_{T\cup P}$, respectively. The first term in Equation \ref{eq:cost_func__general} is the conditional probability of the pseudo-data, $X_P$, given the generative model for the data (ex. LDA or Gaussian mixtures) with the parameters $A$ learned using only the training data.  This term encourages the pseudo-data, $X_P$, to be similar to the true data, $X_T$. The regularizer, $R(A_{T\cup P})$, could be any regularizer encoding our desired characteristics for the parameters $A$.  The weight $\lambda$ controls the relative strengths of the regularizer and the likelihood of the pseudo-data.

\begin{algorithm}[tb]
\begin{algorithmic}[1]
   \caption{Regularized Tensor Decomposition Method (RTDM)}
   \label{alg:RTDM}
   \State {\bfseries Input:} input data~-~$X_T$, regularizing function~-~$R(A_{T\cup P})$, number~of~pseudo-data~-~$N_P$, regularization constant~-~$\lambda$, convergence criterion~-~$\epsilon$
   \State $A_T\leftarrow \text{TDM}(X_T)$
   \State $X_P' \leftarrow \infty$
   \State Draw $X_P$ from $p(X_P|A_T)$
   \While{$||X_P-X_P'||_2 > \epsilon$}
   \State $X_P' \leftarrow X_P$
   \State $X_P \leftarrow X_P - \text{ADAM}(\nabla_{X_P}L(X_T,X_P,\lambda))$
   \EndWhile
   \State $A_{T\cup P} \leftarrow \text{TDM}(X_{T\cup P})$
   \State {\bfseries Return:} $A_{T\cup P}$
\end{algorithmic}
\end{algorithm}

In addition to the weight $\lambda$, the objective in Equation~\ref{eq:cost_func__general} requires us to choose the number of pseudo-data points, $N_P$. The number of pseudo-data points $N_P$ represents how much confidence we put in our prior knowledge of the parameters structure. An advantage of specifying the strength of the regularizer via the number of pseudo-data is that it limits how much the pseudo-data can influence the learned parameters: as the number of training samples $N_T$ increases, for any finite $\lambda$, the maximum effect of the pseudo-data on the inferred parameters diminishes. The dominance of the training data $X_T$ as $N_T \gg N_P$ represents the tendency to put less weight on the pseudo-data as more real data is collected. This notion of convergence is formalized in Theorem \ref{theorem:convergence}.

\begin{theorem}
\label{theorem:convergence}
Fix a likelihood model, $p(X|A)$, and a regularizer, $R(A)$, that is bounded below by a constant $B_R$. For any fixed nonnegative $\lambda$ and $N_P$, as $N_T \rightarrow \infty$, minimizing the loss function, $L$, in Equation \ref{eq:cost_func__general} with respect to $X_P$ results in $A_{T \cup P} \rightarrow A_T$.
\end{theorem}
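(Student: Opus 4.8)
The plan is to reduce the claim to two facts: (i) the tensor decomposition map from empirical moments to parameters is continuous in a neighborhood of the population moments, and (ii) as $N_T \to \infty$ the empirical moments of the pooled data $X_{T\cup P}$ converge to those of $X_T$ alone, \emph{provided} the optimizing pseudo-data $X_P$ remains in a bounded set. Granting these, since both $(\hat M_2^{T\cup P},\hat M_3^{T\cup P})$ and $(\hat M_2^{T},\hat M_3^{T})$ then converge to the same population moments (at which the decomposition in Equation~\ref{eq:decomposition} is identifiable), continuity of the whitening-plus-tensor-power-method pipeline of \citet{anandkumar2014tensor} immediately gives $A_{T\cup P}\to A_T$. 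The bulk of the work is therefore in establishing (ii), and in particular the boundedness of the optimizing pseudo-data.

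First I would record how the moments combine. For the empirical estimators of Appendix~\ref{sec:empirical_moments}, each pooled moment is a weighted average of the per-dataset moments with weights $N_T/(N_T+N_P)$ and $N_P/(N_T+N_P)$, up to lower-order corrections (such as the between-group term in a pooled covariance) that themselves vanish as $N_T\to\infty$ whenever $X_P$ stays bounded. Consequently, for each of the second- and third-order moments,
\begin{align}
\label{eq:moment-gap}
\bigl\| \hat M^{T\cup P} - \hat M^{T} \bigr\| \;\le\; \frac{N_P}{N_T+N_P}\,\bigl\| \hat M^{P} - \hat M^{T} \bigr\| \;+\; o(1).
\end{align}
If $X_P$ ranges over a fixed bounded set then $\hat M^{P}$ is uniformly bounded, and because $\hat M^{T}$ converges to the finite population moment, the right-hand side tends to $0$. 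Thus the pooled and training moments coincide in the limit, and (ii) follows.

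The main obstacle is to show that the minimizer $X_P^\star$ of $L$ is confined to such a bounded set, uniformly for all large $N_T$; this is exactly where the hypothesis $R \ge B_R$ is used. I would bound $L$ at the optimum from above by evaluating it at a fixed reference sample $X_P^{\mathrm{ref}}$ (for instance, one drawn from $p(\cdot\,|\,A^\star)$ at the true parameters $A^\star$). Since $A_T\to A^\star$ by consistency of the TDM and, for fixed $X_P^{\mathrm{ref}}$, also $A_{T\cup P}^{\mathrm{ref}}\to A^\star$ by the moment argument above, the value $L(X_T,X_P^{\mathrm{ref}},\lambda)$ converges to the finite constant $-\log p(X_P^{\mathrm{ref}}\,|\,A^\star)+\lambda R(A^\star)$. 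Combining $L(X_T,X_P^\star,\lambda)\le L(X_T,X_P^{\mathrm{ref}},\lambda)$ with the lower bound $\lambda R(A_{T\cup P})\ge \lambda B_R$ (valid since $\lambda\ge 0$) yields
\begin{align}
\label{eq:nll-bound}
-\log p\bigl(X_P^\star \,\big|\, A_T\bigr) \;\le\; L\bigl(X_T,X_P^{\mathrm{ref}},\lambda\bigr) - \lambda B_R,
\end{align}
whose right-hand side is bounded above uniformly in $N_T$. For the likelihood models under consideration the negative log-likelihood is coercive in the data (it grows without bound as $\|X_P\|\to\infty$ for Gaussian mixtures, while for discrete models such as LDA the data domain is already compact), so Equation~\ref{eq:nll-bound} forces $X_P^\star$ into a compact set independent of $N_T$. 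This is the crux: without the lower bound $B_R$, the optimizer could push $X_P$ to infinity to drive $R$ arbitrarily negative, and the vanishing weight $N_P/(N_T+N_P)$ need not compensate an unbounded $\hat M^{P}$.

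With $X_P^\star$ bounded, $\hat M^{P}$ is bounded, so Equation~\ref{eq:moment-gap} gives $\hat M^{T\cup P}\to\hat M^{T}$; feeding this through the continuous decomposition map then yields $A_{T\cup P}\to A_T$, completing the argument. The only points requiring care beyond the above are the continuity of the tensor power method at the population moments (which holds under the standard non-degeneracy/eigengap conditions guaranteeing identifiability of Equation~\ref{eq:decomposition}) and the coercivity of the per-model negative log-likelihood, both of which I would verify case by case for the Gaussian-mixture and LDA models treated in the paper.
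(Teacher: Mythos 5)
Your proposal is correct and follows the same overall strategy as the paper's proof: write the pooled moment estimates as the convex combination $\frac{N_T}{N_T+N_P}\hat M_{i,T} + \frac{N_P}{N_T+N_P}\hat M_{i,P}$, show that the minimizing pseudo-data stays in a bounded set (this is where the hypothesis $R \ge B_R$ enters), deduce that the pooled moments converge to the training moments at rate $\mathcal{O}(N_P/N_T)$, and push this through the decomposition pipeline to get $A_{T\cup P}\to A_T$. Where you differ is in how the boundedness of the optimizer is established, and your version is the more careful one. The paper argues directly that, because $R$ is bounded below, there exists a ball $\mathcal{B}(\delta_{X_T})$ outside of which the likelihood term dominates whatever the regularizer can gain, so the minimizer must lie inside; but it never exhibits an upper bound on the \emph{optimal value} of $L$, which is exactly what is needed to make that assertion precise (the inequality it writes, comparing the log-likelihood directly to $B_R$, does not quite parse). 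Your reference-point argument supplies the missing step: optimality $L(X_T,X_P^\star,\lambda)\le L(X_T,X_P^{\mathrm{ref}},\lambda)$ plus $\lambda R \ge \lambda B_R$ yields a uniform-in-$N_T$ bound on $-\log p(X_P^\star\,|\,A_T)$, and coercivity of the negative log-likelihood then confines $X_P^\star$ to a compact set. You also make explicit two points the paper uses silently: continuity of the whitening-plus-tensor-power-method map at the population moments, and the fact that the centered estimators $\hat M_2,\hat M_3$ (which involve products of sample averages, e.g.\ the $\hat M_1\otimes\hat M_1$ terms for LDA) are only a convex combination of per-dataset moments up to lower-order cross terms, which you absorb into an $o(1)$ correction. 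So: same route, but your write-up closes the steps the paper leaves informal.
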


The proof of Theorem \ref{theorem:convergence} is presented in Appendix \ref{sec:proof_of_consistency}. Furthermore, in Appendix \ref{sec:proof_of_consistency} we also show that if we use plug-in estimators for the bounds of the standard TDMs (e.g. like those in \citet{anandkumar2012spectral, hsu2013learning}), generally the convergence to the true parameters is of the form $A_{T \cup P} = A + \mathcal{O}(\frac{N_P}{N_T})$ as a consequence of Theorem \ref{theorem:convergence}.

\paragraph{Optimization Procedure}
Algorithm \ref{alg:RTDM} describes the learning procedure for the regularized tensor decomposition method (RTDM). The function TDM() denotes the model parameters learned using the tensor decomposition algorithm, and ADAM() is the gradient descent step based on the ADAM algorithm~\citep{kingma2014adam}. 

Computing the gradient of the loss $L(X_T,X_P,\lambda)$ requires taking the gradient of the two terms in Equation~\ref{eq:cost_func__general}.  The first depends directly on the pseudo-data $X_P$; most likelihoods are straight-forward to compute via standard auto-differentiation packages, e.g. the Python Autograd package~\citep{maclaurin2015autograd}. The second term depends on $X_P$ implicitly via $A_{T\cup P} = \text{TDM}(X_{T\cup P})$.  Fortunately, the standard TDM algorithm consists entirely of linear algebra computations which also renders the second term amendable to automatic differentiation. We initialize our pseudo-data by drawing samples from $p(X_P|A_T)$ so that our pseudo-data start close to the training data, but our results are insensitive to this particular choice of initialization. 

\paragraph{Computational Cost}
We provide a full discussion of the computational cost of RTDM in Appendix \ref{sec:computational_cost}. We note that the limiting step in the algorithm is computing the whitening matrix, $W$, which involves performing SVD on the moment estimate $\hat{M}_2$, resulting in a computational cost of $\mathcal{O}(D^2)$, and that the moments of the training data only need to be computed once and then cached. 

\section{Experiments}
\label{sec:experiments}
In the following, we provide a series of demonstrations on the properties and versatility of our approach on synthetic, semi-synthetic and real data under two generative models: Gaussian mixtures and LDA.  First, we demonstrate that given prior knowledge of the latent variables structure, our method can improve inference when data is sparse and ignore the prior knowledge when data is abundant.  Next, we demonstrate our approach on a range of different regularizers, including those targeting transfer, sparsity, orthogonality, and interpretability.  Finally, we explore the effect of different choices for the parameters $\lambda$ and $N_P$ and discuss guidelines for choosing them.

\paragraph{Basic Demonstration: Encouraging Prior Properties (with Gaussian mixtures).}
We first demonstrate our method for regularizing Gaussian mixture models. The generative process involves first selecting a mixture component $h_n$ out of $K$ choices with probability $w_k$ $(\sum_k w_k=1)$, and then sampling the data  $x_n$ from a multivariate normal $\mathcal{N}(a_{h_n}, \sigma^2)$. We generate the matrix $A$ of mixture components means by sampling from a normal distribution $a_{k,d} \sim \mathcal{N}(0, \sigma_m^2)$. The distribution from which $A$ is generated can be used as a prior when learning $A$, and therefore the regularization function is the log-likelihood of the learned $A_{T\cup P}$ under that prior:

\begin{align}
\label{eq:regularizer__gaussian_mixtures}
R(A_{T\cup P}) = \log{p(A_{T\cup P}|\sigma_m^2)} = -\frac{DK}{2}\log{(2 \pi \sigma_m^2)} - \frac{1}{2 \sigma_m^2}\sum_{d,k}a_{d,k}^2,
\end{align}

In Figure \ref{fig:synthetic__gaussian_mixtures} we demonstrate the RTDM on a synthetic example with $D=10$ dimensions and $K=4$ mixture components drawn from a prior with $\sigma_m^2=1$ and data variance of $\sigma^2 = 100$. We generate $N_T=200$ training data points and use the standard TDM to learn the mixture means. Because of the large variance in the data, the true means are difficult to learn as can be seen by the large errors of the learned means (cyan circles) compared to the true means (green diamonds). Applying the RTDM to regularize the data with $N_P=50$ pseudo-observations and $\lambda=0.5$, we see that we are able to use our knowledge of the prior to learn a more accurate estimate of the means (yellow dots). The right plot of Figure \ref{fig:synthetic__gaussian_mixtures} demonstrates the optimization where optimizing the loss function (Equations \ref{eq:cost_func__general} and \ref{eq:regularizer__gaussian_mixtures} - top) correlates with increasing the log-likelihood on a test data set of 1000 data points (bottom).

\begin{figure*}[ht]
\centering
\begin{subfigure}
\centering
\includegraphics[width=0.42\linewidth]{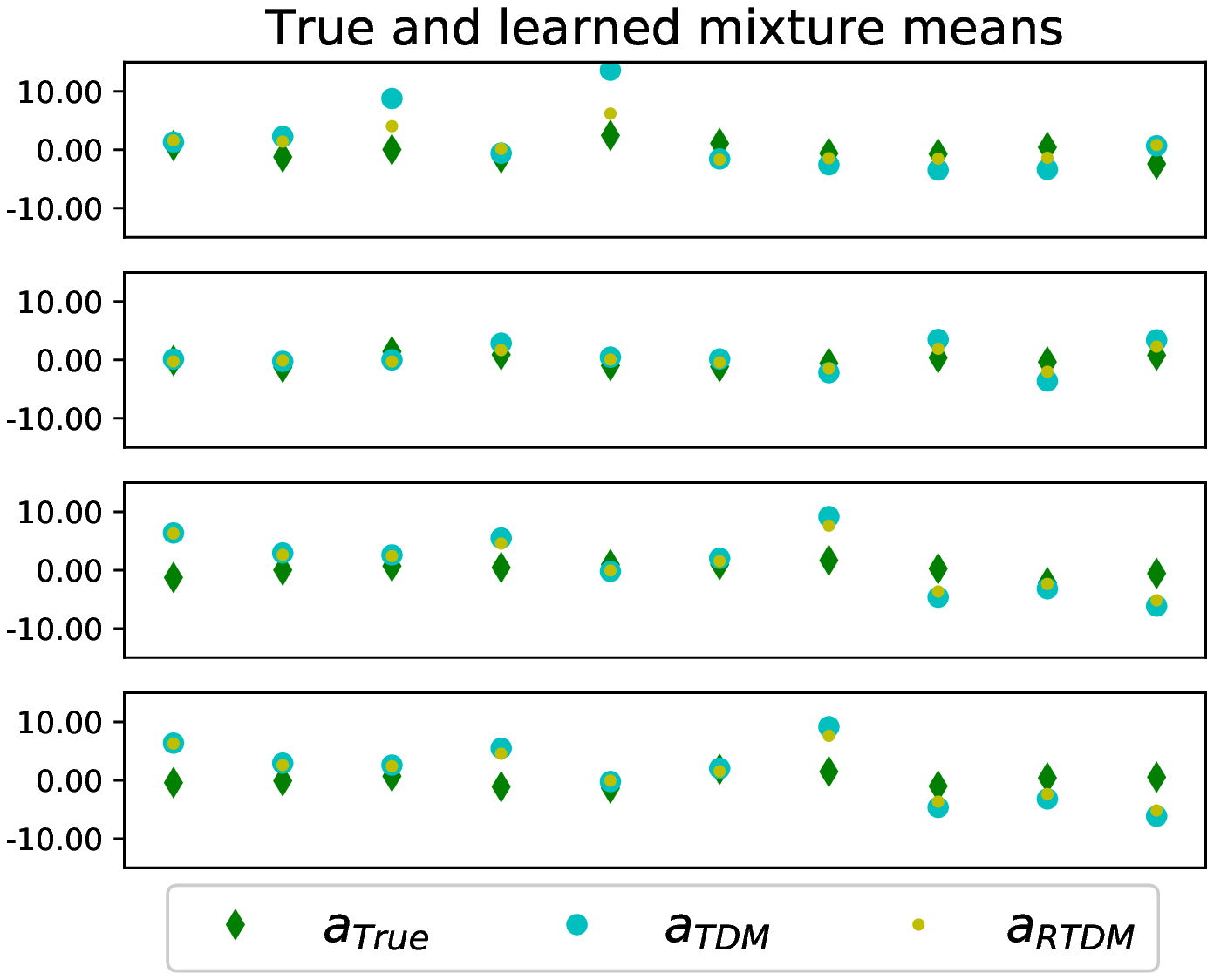}
\end{subfigure}
\begin{subfigure}
\centering
\includegraphics[width=0.45\linewidth]{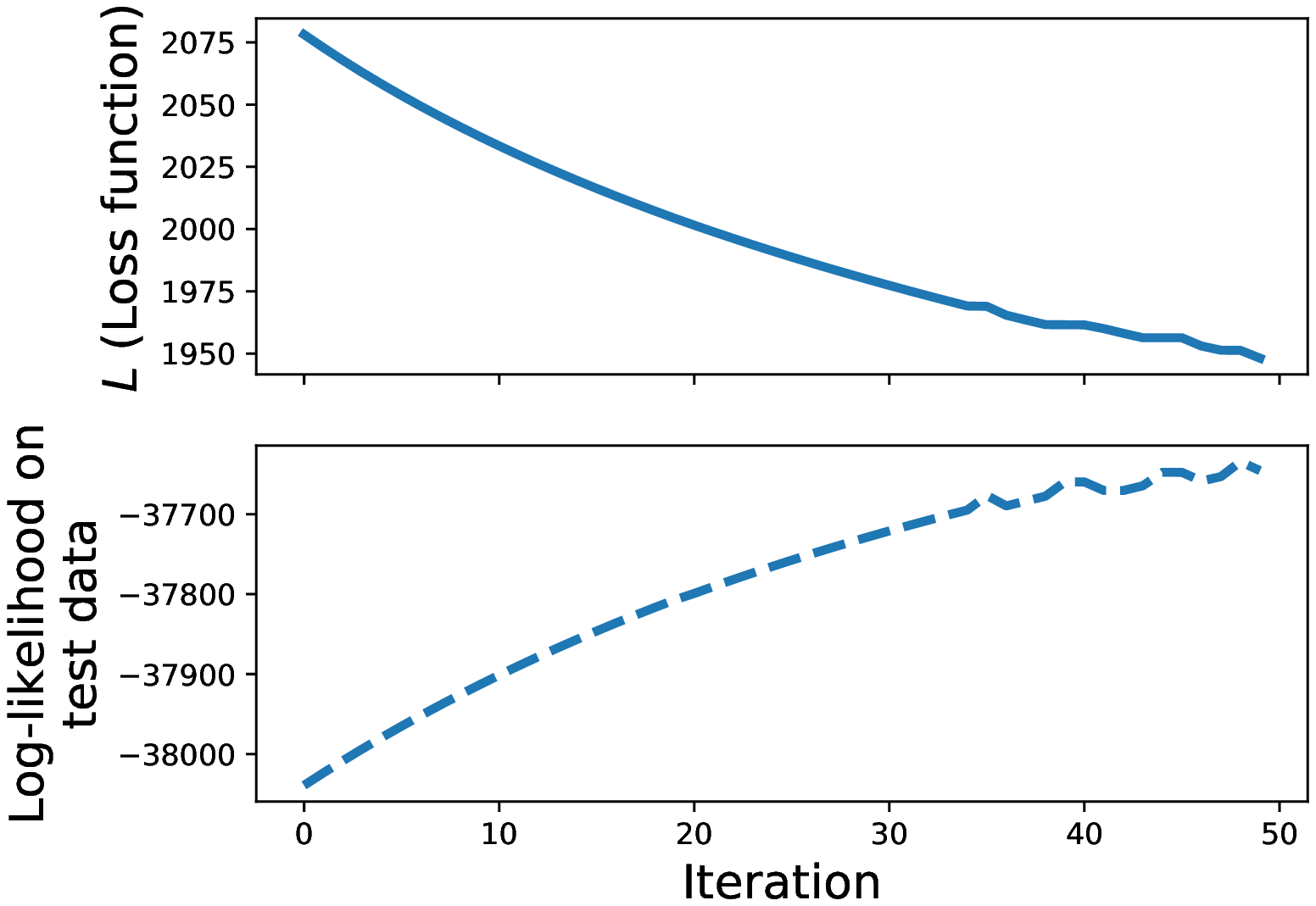}
\end{subfigure}
\caption{\textbf{Regularization with synthetic Gaussian Mixtures data.} RTDM uses prior knowledge of the generative distribution for the mixture means to regularize the TDM when training data is limited. The learned parameters with RTDM are closer to the true parameters (left), and optimization of the loss function $L$ leads to higher log-likelihood on a test set (right).}
\label{fig:synthetic__gaussian_mixtures}
\end{figure*}

\paragraph{Regularizing for Transfer Learning: Demonstration on LDA with semi-synthetic autism spectrum disorder patient data.}
We now demonstrate the RTDM on an LDA generative model, and show that regularization can be used for transfer learning --- i.e. use prior knowledge about the parameter structure of our data to improve inference when data is limited, and ignore it when the data is abundant. We use semi-synthetic autism spectrum disorder (ASD) data --- the data is a simulated dataset, but simulated from topics learned using real data, and we therefore expect these topics to include the sparsity and correlations which are representative of the true data~\citep{arora2013practical}. We use electronic health records of $D=64$ common diagnoses of children with autism~\citep{doshi2014comorbidity}. We use the real data to learn two topic matrices ($K=4$) representing common symptoms for two age groups, 6 to 7 and 8 to 9. We make the assumption that the symptoms of the two age groups share some similarities, and that we can transfer knowledge about one age group to better learn the characteristics of the other. In other words, we expect the topics learned for one age group to be an informative prior for inference on the other group.

To test our method over a range of $N_T$, we sample observations from the LDA model using the topic matrix of ages 6 to 7. We refer to this topic matrix as $A_{true}$, as it represents the true topics that we wish to learn. As a regularizer, we choose $R(A_{T\cup P}) = ||A_{T\cup P}-A_{prior}||_2$, where $A_{prior}$ is the topic matrix representing ages 8 to 9. In other words, we use RTDM to generate pseudo-data which encourages the learned topics from data on ages 6 to 7 to be as similar as possible to the topics learned from ages 8 to 9. In Figure \ref{fig:semisynthetic__ASD} (left) we demonstrate the results of our method for $N_T=100$ and $N_P=30$. We see that as we optimize our pseudo-data to decrease $R(A_{T\cup P})$ (solid blue line), the $L_2$ distance between the learned and true topics, $\varepsilon=||A_{T\cup P}-A_{true}||_2$, also decreases (dashed orange line). In Figure \ref{fig:semisynthetic__ASD} (right), we demonstrate the results for a similar experiment, only this time we choose $N_T=10000$. For this much larger dataset $\varepsilon$ is very small, as we have enough data to estimate $A_{true}$ properly. In this case, the $30$ pseudo points are overwhelmed by the true data, and our algorithm has little effect on the final learned topics (dashed orange line value changes very little).

\begin{figure*}[ht]
\centering
\begin{subfigure}
\centering
\includegraphics[width=0.45\linewidth]{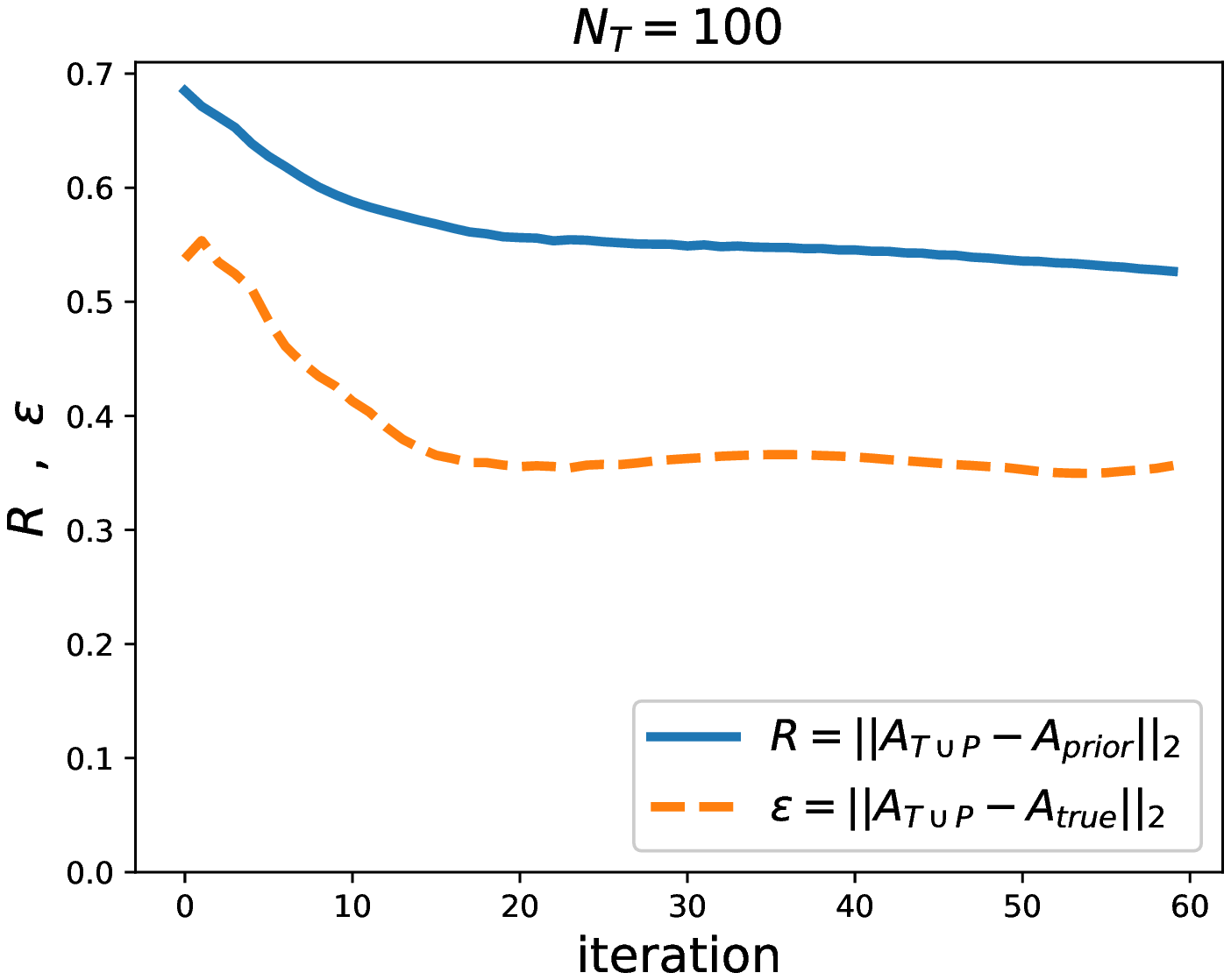}
\end{subfigure}
\begin{subfigure}
\centering
\includegraphics[width=0.45\linewidth]{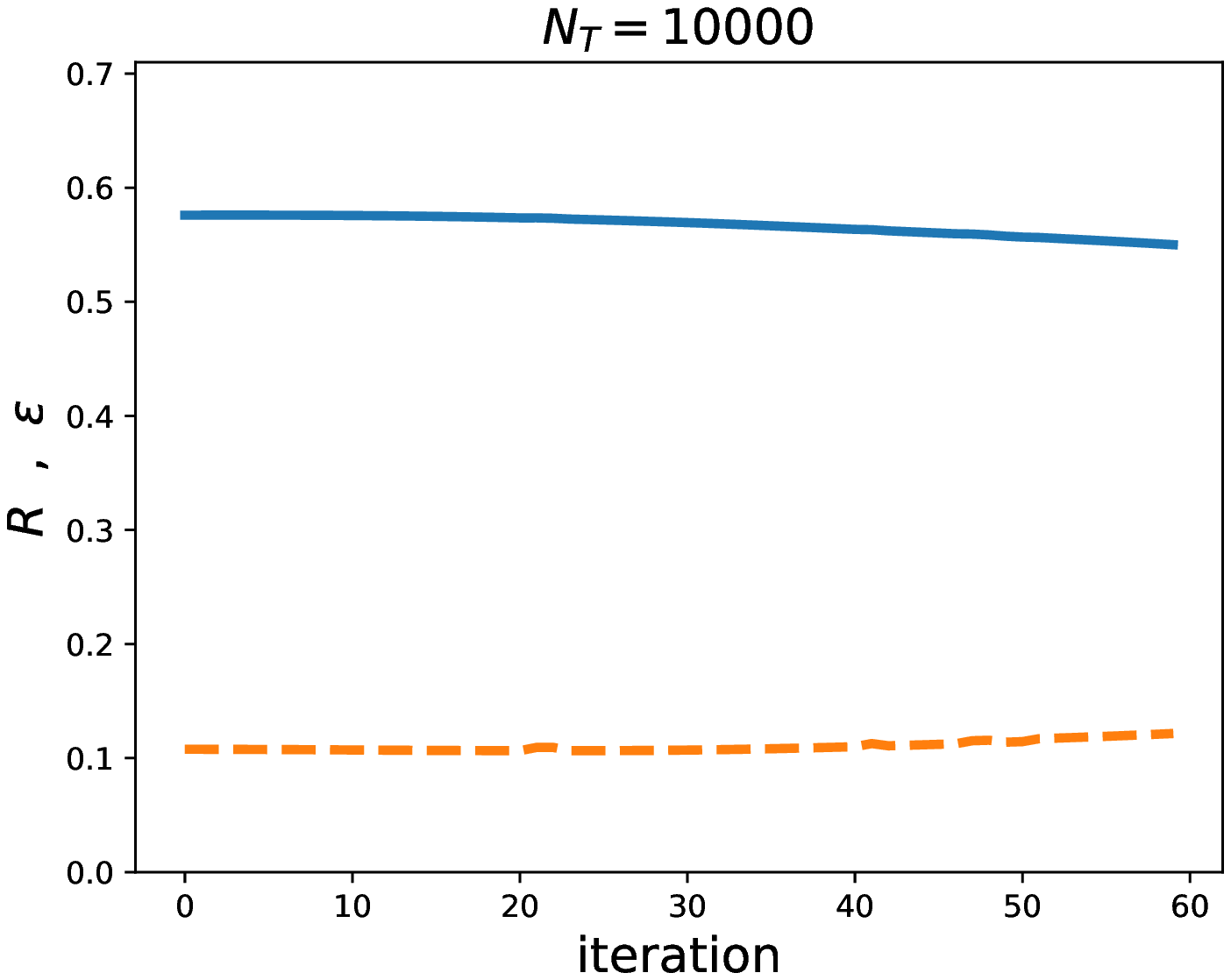}
\end{subfigure}
\caption{\textbf{Transfer learning with semi-synthetic ASD patient data.} $L_2$ distance from the learned topics to the prior topics (solid blue line) and to the true topic matrix (dashed orange line). For both examples $N_P=30$ and $\lambda=300$. For a small amount of training data ($N_T=100$ - left), RTDM allows for transfer learning which improves inference, and for a large amount of training data ($N_T=10000$ - right) our method has little affect on the relatively accurate topics estimation.}
\label{fig:semisynthetic__ASD}
\end{figure*}

\paragraph{Regularizing for Anti-Correlation: Demonstration with LDA; Exploration of the effect of model parameters.}
We now explore the effect of the choice of model parameters, $N_P$ and $\lambda$, on the performance of the RTDM in a setting where we use it on a synthetic LDA dataset to learn anti-correlated topics. Such topics structure leads to more diverse topics and can be useful, for example, in classification of text documents for which we wish each learned group of documents to have words which are unique to that specific group. Specifically, the regularizer we use is
\begin{align}
\label{eq:regularizer__anti_correlation}
R(A_{T\cup P}) = \sum_{i\neq j}a_i\cdot a_j,
\end{align}
where $a_i\cdot a_j$ denotes the dot product. This function is zero when all topics are orthogonal. We choose this particular regularizer for its simplicity, but note that other diversity promoting regularizers have been proposed in the literature~\citep{kwok2012priors}, and our method can easily be generalized to any other regularization function.

We generate synthetic data from an LDA model with $D=100$ dimensions and $K=4$ topics. In all experiments we use the same $X_T$ with $N_T=100$, but initialize a new set of $X_P$. In Figure \ref{fig:synthetic__topics_anti_correlation} we show the topics correlation of $A_{T\cup P}$ for different choices of $\lambda$ and $N_P$. Because the full loss function in Equation (\ref{eq:cost_func__general}) balances the regularizer and the log-likelihood of the pseudo-data, which is proportional to $N_P$, we plot the topics correlation vs. $\lambda/N_P$ rather than $\lambda$.

As either $\lambda$ or $N_P$ is increased, the pseudo-data pushes the learned topics towards higher diversity. However, for a given $N_P$, the influence of the pseudo-data is limited and at some point the topics correlation no longer decreases as $\lambda$ is increased. This implies that an alternative to tuning both parameters of our method could be to set $\lambda$ to a very high value, and adjust the weight we put on our regularizer by only tuning $N_P$.

\begin{figure}[ht]
\vskip 0.2in
\begin{center}
\centerline{\includegraphics[width=0.55\columnwidth]{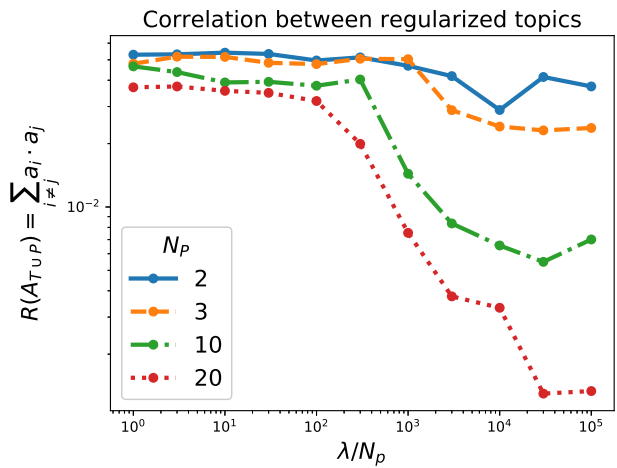}}
\caption{\textbf{Synthetic data - Regularizing for topics anti-correlation.} Topics correlation of $A_{T\cup P}$ for different choices of algorithm parameters ($D=100$, $N_T=100$). While the use of pseudo-data allows us to learn more diverse topics, the fact that at some point the correlation no longer decreases as $\lambda$ increases demonstrates that a given number of pseudo-observations is limited in the influence it can have on the learned topics, no matter how large the regularization weight is.}
\label{fig:synthetic__topics_anti_correlation}
\end{center}
\vskip -0.2in
\end{figure}

\paragraph{Regularizing for Interpretability via Hierarchy Matching: Demonstration with real data of Medical Subject Headings hierarchy.}
We demonstrate the performance of the RTDM with a complex regularization function used on real data. The National Library of Medicine (NLM) uses a hierarchically-organized terminology of medical subject headings (MeSH) for indexing medical articles\footnote{https://www.nlm.nih.gov/mesh/}. Every article is labeled using several headings, which are given an assignment on a hierarchical tree, in which the root represents a general topic, and headings become more specific farther down the tree. An example of three generations of headings in the tree is ``Adult [M01.060.116]'', ``Aged [M01.060.116.100]'' and ``Aged, 80 and over [M01.060.116.100.080]''. Formally, each three digit number in the full heading represents a node, and the periods separating them represent edges.

Topic modeling on subject headings of papers can help in identifying publication and research trends by finding headings which occur together frequently. Because articles are hand labeled, there could be significant inconsistency in labeling---e.g., a particular paper could be given each of the three headings presented in the example, depending on the particular person who labeled it~\citep{doshi2014graph}. A useful property of the topics which could help avoid missing information due to inconsistency in labeling is pushing for topics with headings which are close on the tree.

To encode tree-structured information into the model, we choose a regularizer that is much more complex than simply regularizing for sparsity or diversity, as we want to use our knowledge about the hierarchical indexing structure of the data. We use the following regularizer to achieve this property:
\begin{align}
\label{eq:regularizer__MeSH}
R(A_{T \cup P}) = - \sum_{k=1}^K(\sum_{i\neq j}a_{i,k}a_{j,k}O_{i,j}^{-1}),
\end{align}
where $O_{ij}$ is the distance on the tree between the $i^{th}$ and $j^{th}$ headings. In Appendix \ref{sec:mesh_regularizer_choice} we motivate the choice for this regularizer and explain why it promotes the desired property of learning topics with closely related headings.

We perform experiments on a labeled dataset of research articles on statins---a group of drugs for treating cardiovascular disease~\citep{cohen2006reducing}. Our training data consists of $N_T=500$ documents using the $D=300$ most common headings, and we learn a model with $K=3$ topics.

In Figure \ref{fig:MeSH} (left) we plot the value of the regularizer for the learned topics with different values of $N_P$ and $\lambda$, and observe that similarly to the last example, a given number of pseudo-observations is limited in the effect it can have on the topics, no matter how large $\lambda$ is.  In Figure \ref{fig:MeSH} (right) we plot the log-likelihood of a held out test set of 500 documents, using the learned topics. We see that in most cases, the interpretability of the topics comes at a relatively low cost in terms of prediction accuracy, which grows as $N_P$ is increased.

In Appendix \ref{sec:mesh_table} we illustrate the effect of regularization on the interpretability of the topics, by comparing the top eight headings of all three topics for the original and regularized topics with $N_P=100$ and $\lambda=1000$. We manually color headings on similar branches of the hierarchy tree for clarity. We see that the regularized topics include more headings from similar branches of the tree, and more noticeably, \emph{these headings have significantly higher weights than all other headings}.

\begin{figure*}[ht]
\centering
\begin{subfigure}
\centering
\includegraphics[width=0.45\linewidth]{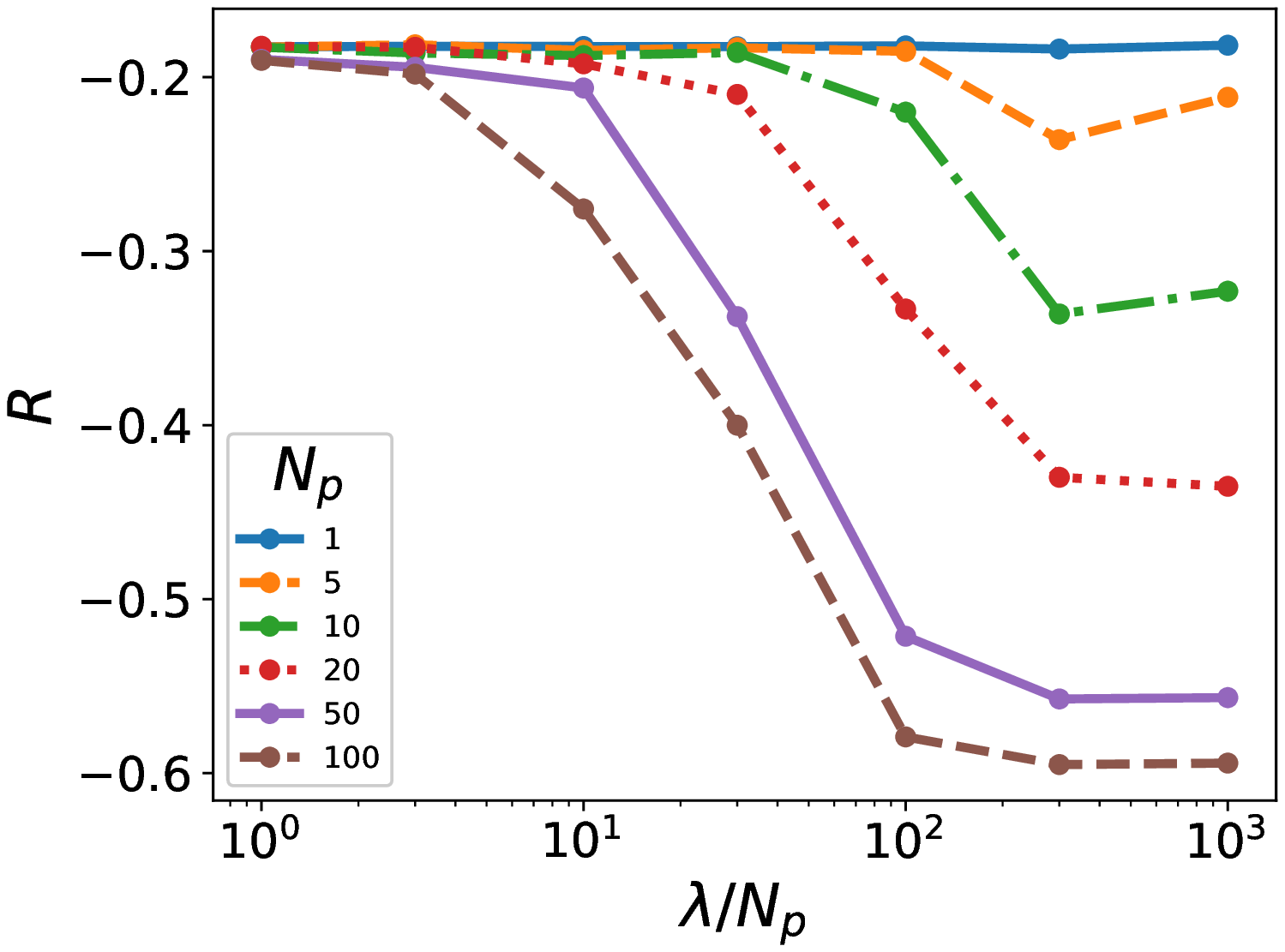}
\end{subfigure}
\begin{subfigure}
\centering
\includegraphics[width=0.48\linewidth]{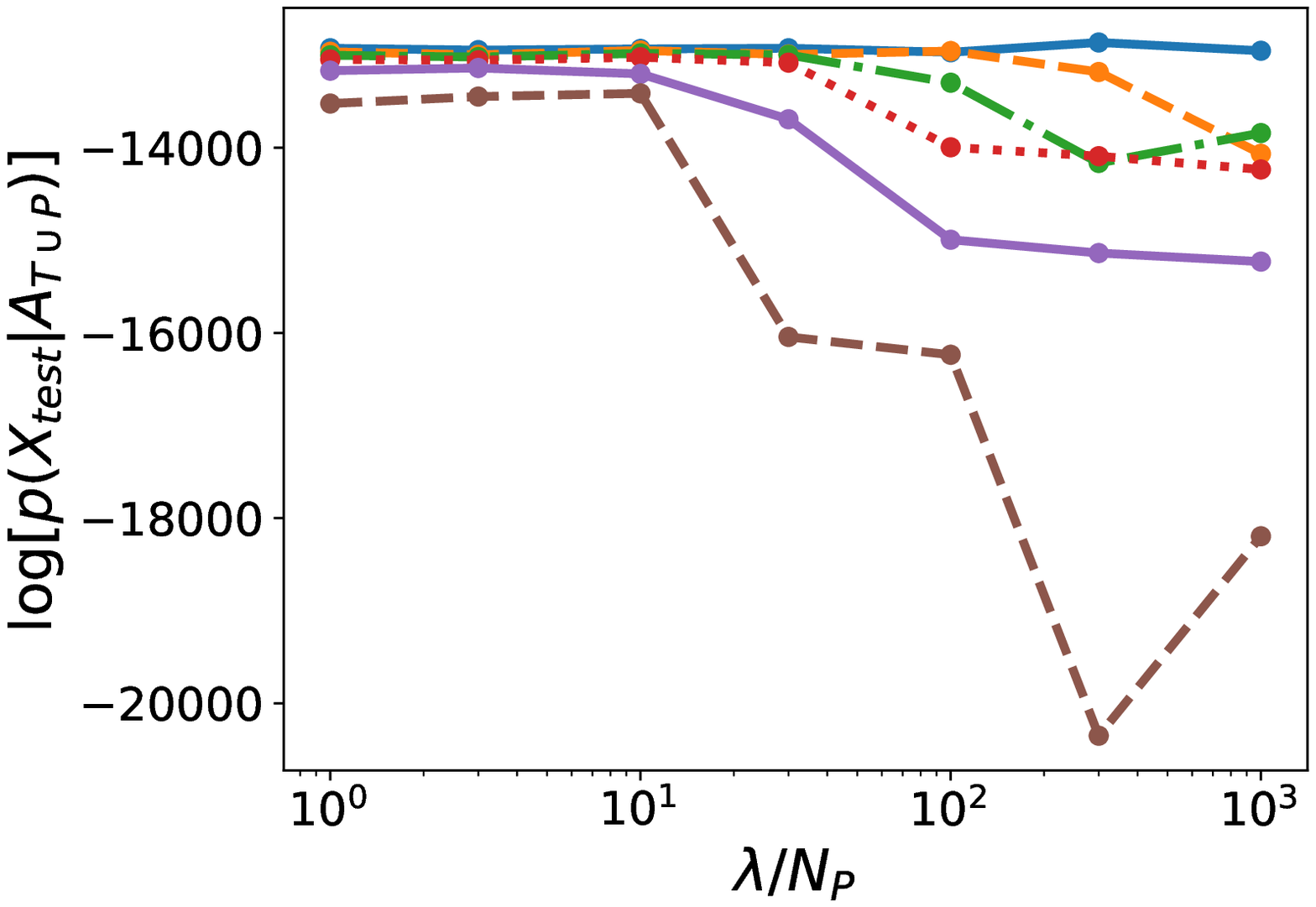}
\end{subfigure}
\caption{\textbf{Real data (MeSH) - Regularizing for interpretability.} $R(A_{T\cup P})$ value (left) and log-likelihood of held-out test set (right) for regularized topics with different values of $N_P$ and $\lambda/N_P$. The regularization function allows us to improve interpretability (lower $R(A_{T\cup P})$ - see Table \ref{table:MeSH_topics}) at the cost of predictive power (lower log-likelihood on test data). The examples with $N_P=20$ and $N_P=50$ demonstrate it is possible to obtain significant improvement in interpretability with a relatively small decrease in predictive power.}
\label{fig:MeSH}
\end{figure*}

\paragraph{Regularizing for Sparsity: Finding Structure in Noisy Data.}
We now consider an example in which the true structure is sparse but the data has been corrupted by additional noise (not part of the assumed generative model).  Specifically, we generate synthetic data according to the LDA model, but add to each element in $X$ noise sampled from a Poisson distribution, $\varepsilon\sim\text{Poisson}(\lambda_P)$. The topics themselves are sampled from a Dirichlet distribution ($a_k\sim\text{Dirichlet}(\alpha_A1_D)$; with $\alpha_A=0.1$). The noise causes the learning algorithm to attribute some probability mass to all dimensions, obscuring the sparse structure of the true topics. To address this problem, we use the log-liklihood of the Dirichlet probability distribution to regularize the topics,
\begin{align}
R(A_{T \cup P})=-\log p(A_{T \cup P}|\alpha_A).
\end{align}

In Figure \ref{fig:synthetic__sparsity_for_noise} (left) we demonstrate that as the optimization learns topics that minimize $R(A_{T\cup P})$, the $L_2$ norm between the true and inferred topics also decreases. The way in which the algorithm improves inference can be understood by comparing the true topics with the topics learned before and after optimization. The right plot in Figure \ref{fig:synthetic__sparsity_for_noise} represents one of the $K=4$ topics, where the x-axis represents the index out of the $D=100$ dimensions and the y-axis the value of that dimension on the simplex. The true---very sparse---topics are represented by green diamonds. Because of the noise, the initial topics learned by the standard tensor decomposition algorithm (cyan circles) are not sparse. The algorithm finds the dimensions which have a significant probability mass, but are just barely above the noise. The Dirichlet prior over the topics prefers sparser topics, which it generates by amplifying the value of the dimensions above noise level, resulting in learned topics which are much closer to the true topics (yellow squares).

We note that unlike all other examples in this paper, where our method's convergence to the standard tensor decomposition method is desirable when data is abundant, in this case abundance of data will prevent our method from sparsifying the topics, as the noise-free model is misspecified. We may increase $N_P$ to allow our algorithm to sparsify topics learned from large noisy datasets---corresponding to using very sharply peaked priors.

\begin{figure*}[ht]
\centering
\begin{subfigure}
\centering
\includegraphics[width=0.45\linewidth]{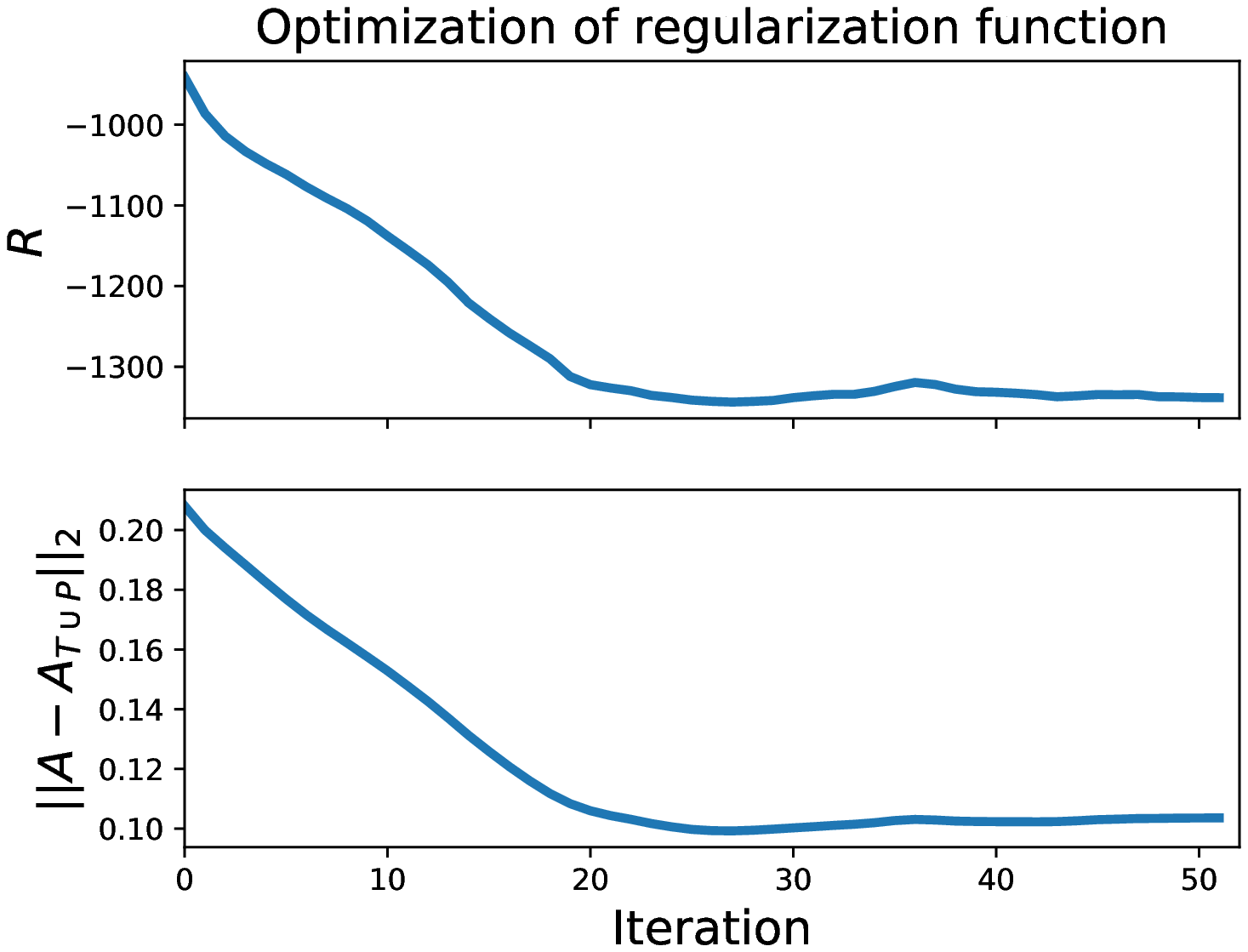}
\end{subfigure}
\begin{subfigure}
\centering
\includegraphics[width=0.45\linewidth]{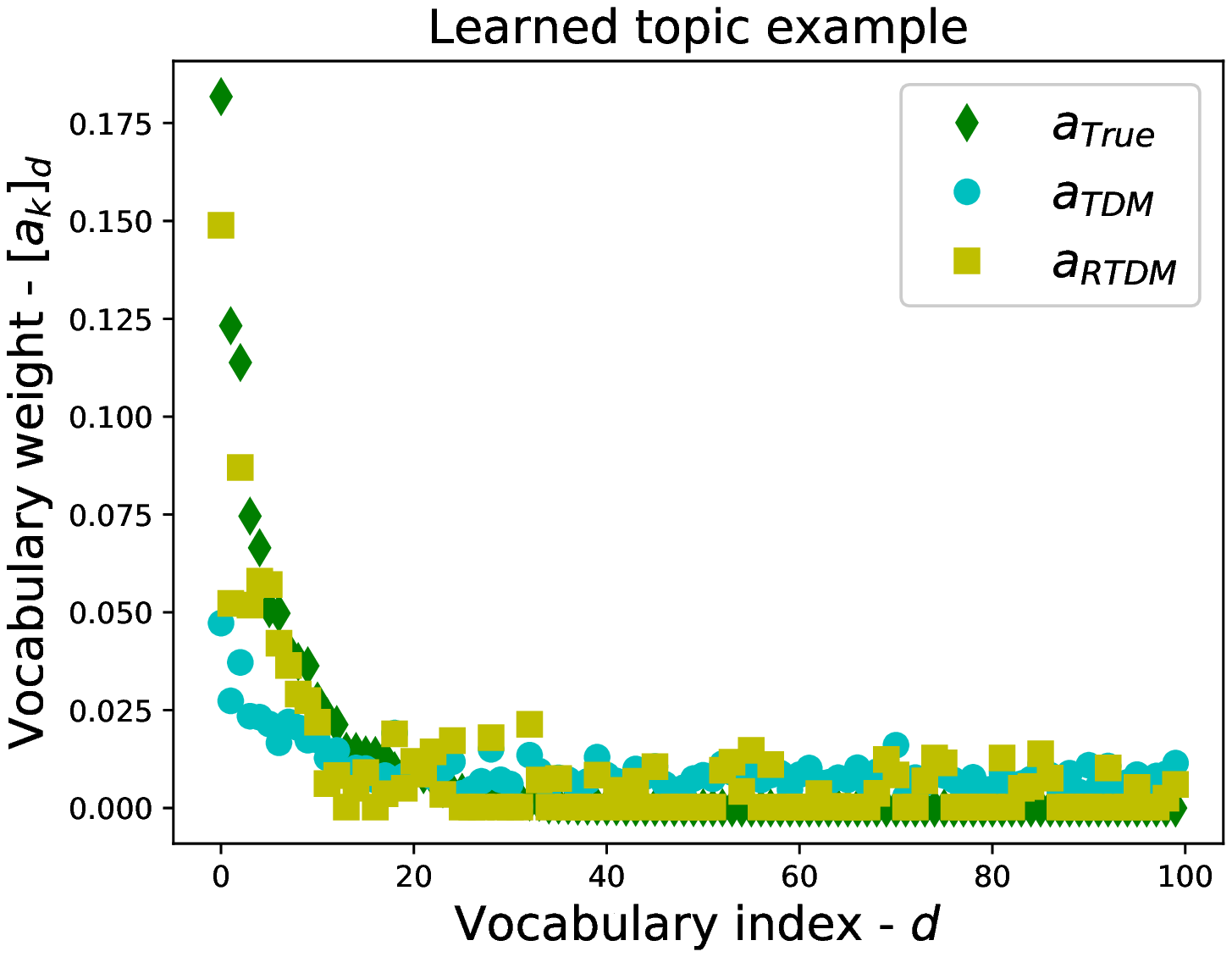}
\end{subfigure}
\caption{\textbf{Synthetic data with noise - Regularizing for sparsity.} (left) Optimization of $R(A_{T\cup P})$ (top) and corresponding improvement in reconstruction error (bottom). (right) An example of the feature weights for one of the learned topics before and after regularization, compared with the true topics. We see that regularizing for sparsity improves the learning quality by amplifying the weight on feature with signal above the noise level.}
\label{fig:synthetic__sparsity_for_noise}
\end{figure*}

\section{Discussion}
\label{sec:discussion}

The proposed tensor decomposition regularization algorithm requires two parameters---$N_P$ and $\lambda$. We demonstrated throughout this paper that for a given $N_P$, at some point increasing $\lambda$ no longer influences the final regularized topics. The intuition behind this saturation is that there is a limit to how big of an effect a small fraction of the data can have on the learned topics. In practice, this means we can choose $\lambda$ to be in the saturated regime (high $\lambda$), and only tune $N_P$ to control the strength of our regularizer, reducing the number of parameter choices required.

More broadly, we provide a general approach for regularizing tensor decomposition methods via pseudo-data.  The designer has full control over the properties they wish to induce via the regularization, and in this sense, our approach is different from, for example, simply bootstrapping the data (which might provide robustness, but not induce desired properties) or bespoke methods based on specific properties (e.g. anchor words).  
%
%
%
A strength of this method lies in its ability to use regularization and prior knowledge to improve learning when data is limited, and ignore our prior beliefs when data is abundant, much like the effect of priors in a Bayesian setting.  Future work could look at alternate optimization techniques for the pseudo-data, including ways to encourage the collection of pseudo-data to have similar properties to the training data. Another open direction is drawing a tighter connection between our method and Bayesian inference. 

\bibliographystyle{plainnat}
\bibliography{bib}

\clearpage
\begin{appendices}

\section{Empirical moments expressions for LDA and Gaussian mixtures}
\label{sec:empirical_moments}

For convenience, we present here the expressions for the empirical estimates of the low order moments of the data required to perform TDM on the models for which we present experimental results --- LDA and Gaussian mixture models. For a full description of TDM we refer the reader to \citet{anandkumar2014tensor}.

As a reminder from Section \ref{sec:background} in the main text, tensor decomposition methods learn the parameters of a latent variable model given as a matrix $A\in\mathbb{R}^{D\times K}$, where $D$ is the dimensionality of the data and $K$ the number of latent variables. TDM works by leveraging the relationship between the empirical moments of the data and the latent parameters of the model. Specifically, $A$ is learned by matching the theoretical moments of the model,
\begin{align}
M_2 & = \sum_{k=1}^K \beta_k a_k a_k^T, \label{eq:M2_decomposition}\\
M_3 & = \sum_{k=1}^K \gamma_k a_k \otimes a_k \otimes a_k, \label{eq:M3_decomposition}
\end{align}
with their empirical estimates, which can be computed from data and are provided below. Here $a_k$ is the $k^{th}$ column of $A$, and $\beta_k$ and $\gamma_k$ are constants that come from the model parameters (depend on each generative model).

\subsection{Empirical moments for LDA} The generative model for LDA is
\begin{align}
b_n&\sim\text{Dirichlet}(\alpha_B1_K) \\
z_{in}|b_n&\sim\text{Discrete}(b_n) \\
w_{in}|z_{in},A&\sim\text{Discrete}(a_{z_{in}}), \label{eq:LDA_model__feature_value}
\end{align}
where $b_n$ is the distribution over topics of sample $n$. $w_{in}$ is the feature assignment of the $i^{th}$ feature in sample $n$, where $w_{in}\in1,..,D$. $z_{in}$ is the topic assignment of the $i^{th}$ feature in sample $n$. We note that a common formulation of LDA also includes a prior on the topics, $a_k$. We omit this prior here as it is not used in TDM.

Given a dataset, we can learn the topics structure by computing the following empirical moments of the data:
\begin{align}
\label{eq:LDA_empirical_M1}
\hat{M}_1 &= \mathbb{E}[e_{w_1}] \\
\label{eq:LDA_empirical_M2}
\hat{M}_2 &= \mathbb{E}[e_{w_1}\otimes e_{w_2}] - \frac{K\alpha_B}{K\alpha_B+1} \hat{M}_1 \otimes \hat{M}_1 \\ 
\label{eq:LDA_empirical_M3}
\hat{M}_3 &= \mathbb{E}[e_{w_1}\otimes e_{w_2}\otimes e_{w_3}]  \nonumber \\
&-\frac{K\alpha_B}{K\alpha_B+2}(\mathbb{E}[e_{w_1}\otimes e_{w_2}\otimes \hat{M}_1] \nonumber \\
&+\mathbb{E}[e_{w_1}\otimes \hat{M}_1 \otimes e_{w_2}] + \mathbb{E}[\hat{M}_1\otimes e_{w_1}\otimes e_{w_2}]) \nonumber \\
&+\frac{2(K\alpha_B)^2}{(K\alpha_B+2)(K\alpha_B+1)}\hat{M}_1 \otimes \hat{M}_1 \otimes \hat{M}_1,
\end{align}
where $e_{w_i}\in\{0,1\}^D$ is the vector whose only non-zero element corresponds to the feature assignment, $w_i$. The topics are found by decomposing the empirical moments in Equations \ref{eq:LDA_empirical_M2} and \ref{eq:LDA_empirical_M3} and matching them with the moments in Equations \ref{eq:M2_decomposition} and \ref{eq:M3_decomposition}, with $\beta_k = \beta = \alpha_B/(K\alpha_B+1)K\alpha_B$ and $\gamma_k = \gamma = 2\alpha_B/(K\alpha_B+2)(K\alpha_B+1)K\alpha_B$.

\subsection{Empirical moments for Gaussian mixtures}
The generative model for (spherical) Gaussian mixture models is
\begin{align*}
  h_n &\sim \text{Multinomial}(1,w), \\
  x_n|h_n,A &\sim \mathcal{N}(a_{h_n},\sigma^2).
\end{align*}

where $a_{h_n}$ is the $(h_n)^{th}$ column $A\in \mathbb{R}^{D\times K}$ representing the mixture means, and $w\in \mathbb{R}^K$ represents the probability of data points to be drawn from each topic $(\sum_{k=1}^K w_k = 1)$.

The empirical moments of the data are given by \citep{hsu2013learning}

\begin{align}
\label{eq:tensor_estimates_Gauss_2}
\hat{M}_2 &= \mathbb{E}[x \otimes x] - \sigma^2I \\
\label{eq:tensor_estimates_Gauss_3}
\hat{M}_3 &= \mathbb{E}[x \otimes x \otimes x] - \sigma^2 \sum_{i=1}^D(\mathbb{E}[x]\otimes e_i \otimes e_i \notag\\
  &+ e_i \otimes \mathbb{E}[x]\otimes e_i + e_i \otimes e_i \otimes \mathbb{E}[x]),
\end{align}
where $\sigma^2$ is estimated by the smallest eigenvalue of the covariance matrix, $ \mathbb{E}[x \otimes x] - \mathbb{E}[x] \otimes \mathbb{E}[x]$. The topics are found by decomposing the empirical moments in Equations \ref{eq:tensor_estimates_Gauss_2} and \ref{eq:tensor_estimates_Gauss_3} and matching them with the moments in Equations \ref{eq:M2_decomposition} and \ref{eq:M3_decomposition}, with $\beta_k = \gamma_k = w_k$.

\section{Convergence of RTDM to true model parameters}
\label{sec:proof_of_consistency}

\begin{theorem}
Fix a likelihood model, $p(X|A)$, and a regularizer, $R(A)$, that is bounded below by a constant $B_R$. For any fixed nonnegative $\lambda$ and $N_P$, as $N_T \rightarrow \infty$, minimizing the cost function, $L$, in Equation \ref{eq:cost_func__general} with respect to $X_P$ results in $A_{T \cup P} \rightarrow A_T$.
\end{theorem}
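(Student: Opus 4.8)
The plan is to exploit the fact that the tensor decomposition method depends on the data only through empirical moments, and that the moments of the combined dataset are a weighted average in which the pseudo-data receive vanishing weight as $N_T \to \infty$. Every moment estimate $\hat M_2^{T\cup P}, \hat M_3^{T\cup P}$ (the Appendix expressions) is a continuous function of per-sample empirical averages $\langle f\rangle$, and for the combined dataset each such average splits as
$$\langle f\rangle_{T\cup P} = \frac{N_T}{N_T+N_P}\langle f\rangle_T + \frac{N_P}{N_T+N_P}\langle f\rangle_P,$$
so that $\langle f\rangle_{T\cup P} - \langle f\rangle_T = \frac{N_P}{N_T+N_P}\bigl(\langle f\rangle_P - \langle f\rangle_T\bigr)$. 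Since $N_P$ is fixed, the prefactor tends to $0$; hence if the pseudo-data averages stay bounded uniformly in $N_T$, every combined moment converges to the corresponding true-data moment, $\hat M_j^{T\cup P}\to\hat M_j^{T}$. The TDM map from moments to parameters is continuous wherever the orthogonal decomposition is well posed (the continuity underlying the consistency guarantees of \citet{anandkumar2014tensor}), so moment convergence yields $A_{T\cup P} = \mathrm{TDM}(\hat M^{T\cup P}) \to \mathrm{TDM}(\hat M^{T}) = A_T$.

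The crux is therefore to show that the \emph{minimizing} pseudo-data cannot escape to infinity, i.e. that $\langle f\rangle_P$ remains bounded uniformly in $N_T$ along the optimizer, and this is exactly where the hypothesis $R \ge B_R$ enters. First I would fix a reference pseudo-dataset $\tilde X_P$ (for instance the initialization, or any configuration in the interior of the data support) and note that, because $A_T \to A$ and $R$ is finite at the corresponding limiting parameters, the optimal loss is bounded above by a constant $C$ independent of $N_T$: $L(X_T, X_P^\ast,\lambda) \le L(X_T,\tilde X_P,\lambda)\le C$. Combining this with the lower bound $\lambda R(A_{T\cup P})\ge \lambda B_R$ gives
$$-\log p(X_P^\ast\mid A_T) \;\le\; C - \lambda B_R,$$
so the minimizer lies in a fixed sublevel set of the negative log-likelihood. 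For the continuous models (Gaussian mixtures) $-\log p(\cdot\mid A_T)$ is coercive, so this sublevel set is bounded and the moments $\langle f\rangle_P$ are bounded on it; for discrete models (LDA) the data space is already bounded and the claim is immediate. Uniformity in $N_T$ follows because $A_T$ ranges over a shrinking neighborhood of the true $A$, over which the coercivity constants can be taken uniform.

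The main obstacle is precisely this coercivity/uniform-boundedness step: without the lower bound on $R$, the regularizer term could be driven to $-\infty$ by pathological pseudo-data and thereby subsidize an arbitrarily large likelihood penalty, letting $\langle f\rangle_P$ diverge fast enough that $\frac{N_P}{N_T+N_P}\langle f\rangle_P$ no longer vanishes; the hypothesis $R\ge B_R$ is what decouples the two terms and forces the likelihood term alone to confine the pseudo-data. Once uniform boundedness is established, the remaining steps are routine: invoke the law of large numbers (or the assumed convergence of the true-data statistics) so that $\langle f\rangle_T$ converges, assemble the bound $\|\hat M^{T\cup P}-\hat M^{T}\| = \mathcal{O}(N_P/N_T)$ from the displayed splitting, and push it through the continuity of the TDM map to conclude $A_{T\cup P}\to A_T$, recovering the rate $A_{T\cup P} = A_T + \mathcal O(N_P/N_T)$ whenever the decomposition map is locally Lipschitz.
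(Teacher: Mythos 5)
Your proof follows essentially the same route as the paper's: decompose the combined empirical moments as a convex combination placing weight $N_P/(N_T+N_P)$ on the pseudo-data, use the lower bound $R \ge B_R$ to confine the minimizing pseudo-data so that its moments stay bounded uniformly in $N_T$, and push the resulting $\mathcal{O}(N_P/N_T)$ moment perturbation through the continuity of the tensor decomposition map to get $A_{T\cup P} \to A_T$. If anything, your sublevel-set argument for the boundedness step (comparing the optimum's loss to that of a fixed reference pseudo-dataset and then invoking coercivity of $-\log p(\cdot\mid A_T)$, or boundedness of the data space for discrete models) is cleaner and more explicit than the paper's informal $\delta$-ball argument, but the underlying idea is the same.
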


\begin{proof}

The model parameters matrix, $A_{T \cup P}$, is the result of applying a standard tensor decomposition method with the estimated $i^{th}$ order  moments $\hat{M}_{i, T \cup P}$, for $(i\in\{2,3\})$, computed using the augmented data, $X_{T \cup P}$, containing the training and the pseudo-data. Since $\hat{M}_{i, T \cup P}$ is an estimated expectation of some function of the data (depending on the generative model being learned), we can write $\hat{M}_{i, T \cup P}$ as a weighted sum of the $i$-th moment estimated using the training data, $\hat{M}_{i,T}$, and the that estimated using the pseudo data, $\hat{M}_{i,T}$:

\begin{align}
\hat{M}_{i, T \cup P} &= \frac{N_T}{N_T+N_P}\hat{M}_{i,T} + \frac{N_P}{N_T+N_P} \hat{M}_{i,P} \\ \nonumber
&= \hat{M}_{i,T} + \frac{N_P}{N_T+N_P}(\hat{M}_{i,P} - \hat{M}_{i,T}),
\end{align}

Therefore, if $\|\hat{M}_{i,P} - \hat{M}_{i,T}\|_2$ can be bounded by some constant $B_{M_i}$ for all $N_T$ and all training data $X_T$ of size $N_T$, then we have that the training data dominates the pseudo data in the moment estimation as as $N_T$ increases. More formally, we can conclude that

\begin{align}
\lim_{N_T \rightarrow \infty} \hat{M}_{i, T \cup P} = 
\lim_{N_T \rightarrow \infty} \left[ \hat{M}_{i, T} + \mathcal{O} \left( \frac{N_P}{N_T} \right) \right] = 
\hat{M}_{i, T}.
\end{align}

It then follows that as the training data size $N_T$ increases, $A_{T \cup P}$ converges to the parameter matrix computed by a standard TDM using the training data alone, i.e. $A_{T \cup P}$ converges to $A_T$.

It remains to show that $\|\hat{M}_{i,P} - \hat{M}_{i,T}\|_2 \leq B_{M_i}$ for all $N_T$ and $|X_T| = N_T$. We first note that $\hat{M}_{i,T}$ converge to the true (finite) moment $M_i$ of the model parameters, that is, for any $\epsilon > 0$ there is some integer $N_\epsilon$ such that $\|\hat{M}_{i,T} - M_i\|_2 \leq \epsilon$ for all $|X_T| > N_\epsilon$. On the other hand, since the regularizer term, $R(A_{T \cup P})$, in Equation \ref{eq:cost_func__general} is bounded below, for any training data $X_T$, there must exist some $\delta_{X_T}$-ball, $\mathcal{B}(\delta_{X_T})$, centered at the the mean of the theoretical distribution of $X_T$ such that $log(X_P|A(X_T)) > B_R$ for all $X_P$ sampled outside $\mathcal{B}(\delta_{X_T})$. Thus, the pseudo data $X_P^*$ that minimizes the cost function $L$ in Equation \ref{eq:cost_func__general} must lie within $\mathcal{B}(\delta_{X_T})$. As $N_T$ become sufficiently large, we can assume that $\delta_{X_T}$ is constant for all training data set $|X_T| = N_T$, and, hence, that $\|\hat{M}_{i,p}\|_2$ is bounded, say by some constant $\gamma$. Thus, for $N_T > N_\epsilon$, we have the following for any $|X_T| = N_T$: 
\begin{align}
\|\hat{M}_{i,P} - \hat{M}_{i,T}\|_2 &\leq \|\hat{M}_{i,T} - M_i\|_2 + \|\hat{M}_{i,P} - M_i\|_2\\
&\leq \|\hat{M}_{i,T} - M_i\|_2 + \|\hat{M}_{i,P}\|_2 + \|M_i\|_2 \\
&\leq \epsilon + \gamma + \|M_i\|_2 \overset{\text{def}}{=}B_{M_i}.
\end{align}
The above shows that $\|\hat{M}_{i,P} - \hat{M}_{i,T}\|$ is bounded by a fixed constant $B_{M_i}$ and thus completes the proof.
\end{proof}

The assumption of boundedness of the regularizing function $R(A_{T \cup P})$ can be relaxed: for $X_P$ sampled from $\delta$-ball centered at the origin, we assume that the regularization term grows more slowly than the log-likelihood term as $\delta \to \infty$. However, most reasonable regularizers have some notion of "optimal" model parameters, in the sense that there is a particular minimal value for the models parameters which would have been chosen if the training data would have been completely ignored, and therefore the lower boundedness of the regularizer is a very mild assumption from a practical standpoint.

Because TDM are provably converge to the true model parameters when there is no model mismatch \citep{anandkumar2014tensor, anandkumar2012spectral, hsu2013learning}, a corollary of Theorem \ref{theorem:convergence}, is that the RTDM algorithm converges to the true model parameters, $A$. Furthermore, the convergence of the TDM is shown through perturbation analysis of the TDM to perturbation of the estimated tensors, $\hat{M}_i$. If these analyses are robust to small perturbations, the additional perturbation of $\hat{M}_i$ due to the pseudo-data, which is shown in the proof above to be of order $\mathcal{O} \left( \frac{N_P}{N_T} \right)$ for large $N_T$ can be plugged-in to these analyses to obtain the convergence rate of the RTDM to the true model parameters. In particular, for LDA \citep{anandkumar2012spectral} and Gaussian mixtures \citep{hsu2013learning}, the parameters reconstruction error bound was shown to be linear in the moments perturbation, and therefore for these models the RTDM converges to the true topics at a rate of $\mathcal{O} \left( \frac{N_P}{N_T} \right)$.

\section{Computational Cost}
\label{sec:computational_cost}

The computational complexity of the tensor decomposition algorithm as it appears in \citet{anandkumar2014tensor} is $\mathcal{O}(D^3)$, where the limiting step is in computing $\hat{M}_3\in\mathbb{R}^{D\times D\times D}$ and whitening it to the $\mathbb{R}^{K\times K\times K}$ tensor, $\hat{M}_{3,w}$. \citet{zou2013contrastive} demonstrated that for sparse data, the tensor decomposition can be performed in $\mathcal{O}(DK+nnz(X))$ where $nnz(X)$ is the number of non-zero elements in $X$. Because our algorithm is based on differentiating the results of the tensor decomposition algorithm with respect to its input, if we wish for our algorithm to be flexible enough to impose any regularizer, $X_P$ will generally not be sparse, and we cannot use the method introduced in \citet{zou2013contrastive}. Instead, we first compute $\hat{M}_2$, and use the whitening matrix, $W$ to whiten the data. We then compute $\hat{M}_{3,w}$ directly from the whitened data $X_w$, and never explicitly compute $\hat{M}_3$. This makes the limiting step in the algorithm the SVD computation of $\hat{M}_2$, and the computational complexity of the algorithm is $\mathcal{O}(D^2)$. We note that even though for high dimensional data with a large number of samples, $N_T$, the cost of constructing $\hat{M}_2$ is of order $\mathcal{O}(N_T D^2)$, this step must be preformed only once. For each subsequent optimization step we only need to recompute the pseudo-data contribution to $\hat{M}_2$ which has computational cost of $\mathcal{O}(N_P D^2)$, and in such cases we expect $N_P$ to be much smaller than $N_T$.

\section{Choice of regularizer for MeSH dataset}
\label{sec:mesh_regularizer_choice}

For the task of regularizing the model we learn for the MeSH data we choose a regularizer which is much more complex than simply regularizing for sparsity or diversity, as we want to use our knowledge about the hierarchical indexing structure of the data. We use the following regularizer to achieve this property---
\begin{align}
R(A) = - \sum_{k=1}^K(\sum_{i\neq j}A_{ik}A_{jk}O_{ij}^{-1}).
\end{align}
where $O_{ij}$ is the distance on the tree between the $i^{th}$ and $j^{th}$ headings. Minimizing this regularizer rewards topics with several headings which are close to each other on the tree ($-O_{ij}^{-1}$ is more negative), while simultaneously pushing for topics with at least more than one highly weighted heading. The latter property is important in understanding why we chose this regularizer over the more straightforward regularizer of $\sum_{k=1}^K\sum_{i\neq j}A_{ik}A_{jk}O_{ij}$---the total distance of headings on the tree weighted by their probabilities. Minimizing the weighted total distance between headings can be artificially reduced by making sparser topics and is trivially zero for a topic which has all of its probability on one heading, whereas the regularizer we use prefers the probability mass to be spread across as many headings as possible.\footnote{To see this, consider the case where all elements in the vector $a$ are zero except for $D_*$ entries with the value $1/D_*$. Then $-\sum_{i\neq j}a_ia_j= -{{D_*}\choose{2}} \frac{1}{D_*^2}=-\frac{1}{2}(1-\frac{1}{D_*})\sim\frac{1}{D_*}$, implying that the $-\sum_{i\neq j}a_ia_j$ term in the regularizer scales in inverse proportion to the number of non-zero elements.} While sparsity itself is often considered a desirable property for interpretability, in practice we observed that minimizing the weighted distance between headings tended to produce topics which were too sparse, leading us to choose the regularizer we present in Equation \ref{eq:regularizer__MeSH}.

To make computation easier and more efficient, $R$ can be written in matrix notation as
\begin{align}
\label{eq:regularizer__MeSH__matrix_notation}
R(A) = - \frac{1}{2}\textrm{tr}(A^TO^*A-A^TA),
\end{align}
where $O^*_{ij}=O^{-1}_{ij}$ for $i\neq j$ and $O^*_{ij}=1$ for $i=j$.

\section{Learned topics for MeSH data}
\label{sec:mesh_table}

\begin{table*}[!ht]
\vskip 0.15in
\begin{center}
\begin{small}
\begin{sc}
\begin{tabular}{llll}
\toprule
Weight & Initial topics & Weight & Final topics \\
\midrule
\color{blue} 0.0402 & \color{blue} Female & \color{blue} 0.4451 & \color{blue} Male \\
\color{JungleGreen} 0.0393 & \color{JungleGreen} Risk Factors & \color{blue} 0.4281  & \color{blue} Female \\
\color{blue} 0.0367 & \color{blue} Male & \color{magenta} 0.0349 & \color{magenta} Middle Aged \\
0.0361 & Humans & 0.0175 & Adult \\
0.0359 & Aged & 0.0166 & Aged \\
0.0330 & Middle Aged & \color{JungleGreen} 0.0045 & \color{JungleGreen} Double-Blind Method \\
0.0277 & Cardiovascular Diseases & \color{magenta} 0.0028 & \color{magenta} Aged, 80 and over \\
\color{JungleGreen} 0.0272 & \color{JungleGreen} Clinical Trials as Topic & \color{JungleGreen} 0.0025 & \color{JungleGreen} Follow-Up Studies \\
\hline
0.0717 & Clinical Trials as Topic & \color{red} 0.4056 & \color{red} \color{red} Anticholesteremic Agents \\
0.0597 & Animals & \color{red} 0.3108 & \color{red} Hydroxymethylglutaryl-CoA ... \\
\color{red} 0.0526 & \color{red} Hydroxymethylglutaryl-CoA ... & \color{red} 0.2532 & \color{red} Hypolipidemic Agents \\
0.0496 & Humans & 0.0002 & Fenofibrate \\
0.0376 & Coronary Disease & 0.0002 & Fibrinolysis \\
\color{red} 0.0368 & \color{red} Anticholesteremic Agents & 0.0001 & Platelet Membrane Glycoproteins \\
0.0319 & Hypercholesterolemia & 0.0001 & Erythrocytes \\
\color{red} 0.0284 & \color{red} Hypolipidemic Agents & 0.0001 & Mutagenicity Tests \\
\hline
\color{blue} 0.0559 & \color{blue} Male & \color{blue} 0.0483 & \color{blue} Male \\
\color{blue} 0.0539 & \color{blue} Female & \color{blue} 0.0477 & \color{blue} Female \\
0.0537 & Middle Aged & \color{red} 0.0476 & \color{red} Anticholesteremic Agents \\
0.0485 & Anticholesteremic Agents & \color{red} 0.0445 & \color{red} Hydroxymethylglutaryl-CoA ... \\
0.0432 & Hypercholesterolemia & 0.0333 & Humans \\
0.0384 & Adult & 0.0319 & Middle Aged \\
0.0352 & Aged & \color{red} 0.0288 & \color{red} Hypolipidemic Agents \\
0.0338 & Humans & 0.0281 & Clinical Trials as Topic \\
\bottomrule
\end{tabular}
\end{sc}
\end{small}
\end{center}
\caption{\textbf{Top headings for learned topics} --- Headings which are close to each other on the hierarchy tree are color coded with the same color (black signifies no neighbors within the topic)}
\label{table:MeSH_topics}
\vskip -0.1in
\end{table*}


\end{appendices}

\end{document}